\newtheorem{lemma}{Lemma}
\begin{document}

\title{\fontsize{19}{28}\selectfont Lightweight Task-Oriented Semantic Communication Empowered by Large-Scale AI Models}

\author{Chuanhong Liu, Caili Guo, \emph{Senior Member}, \emph{IEEE}, Yang Yang, Mingzhe Chen, \\and Tony Q. S. Quek, \emph{Fellow}, \emph{IEEE}
\vspace{-0.5cm}
\thanks{Copyright (c) 20xx IEEE. Personal use of this material is permitted. However, permission to use this material for any other purposes must be obtained from the IEEE by sending a request to pubs-permissions@ieee.org.}
\thanks{This work was supported in part by the National Natural Science Foundation of China (No. 62371070), and in part by Beijing Natural Science Foundation (No. L222043). (\textit{Corresponding author: Caili Guo})}
\thanks{C. Liu and C. Guo are with the Beijing Key Laboratory of Network System Architecture and Convergence, School of Information and Communication Engineering, Beijing University of Posts and Telecommunications, Beijing 100876, China (e-mail: 2016\_liuchuanhong@bupt.edu.cn; guocaili@bupt.edu.cn).}
\thanks{Y. Yang is with the Beijing Laboratory of Advanced Information Networks, School of Information and Communication Engineering, Beijing University of Posts and Telecommunications, Beijing 100876, China (e-mail: yangyang01@bupt.edu.cn).}
\thanks{M. Chen is with the Department of Electrical and Computer Engineering and Institute for Data Science and Computing, University of Miami, Coral Gables, FL, USA (e-mail: mingzhe.chen@miami.edu).}
\thanks{T. Q. S. Quek is with the Dept. of Information Systems Technology and Design, Singapore University of Technology and Design, Singapore, 487372 (e-mail: tonyquek@sutd.edu.sg).}
}
% The paper headers
% \markboth{Journal of \LaTeX\ Class Files,~Vol.~14, No.~8, August~2021}%
% {Shell \MakeLowercase{\textit{et al.}}: A Sample Article Using IEEEtran.cls for IEEE Journals}

\maketitle
\begin{abstract}
Recent studies have focused on leveraging large-scale artificial intelligence (LAI) models to improve semantic representation and compression capabilities. However, the substantial computational demands of LAI models pose significant challenges for real-time communication scenarios. To address this, this paper proposes utilizing knowledge distillation (KD) techniques to extract and condense knowledge from LAI models, effectively reducing model complexity and computation latency. Nevertheless, the inherent complexity of LAI models leads to prolonged inference times during distillation, while their lack of channel awareness compromises the distillation performance. These limitations make standard KD methods unsuitable for task-oriented semantic communication scenarios. To address these issues, we propose a fast distillation method featuring a pre-stored compression mechanism that eliminates the need for repetitive inference, significantly improving efficiency. Furthermore, a channel adaptive module is incorporated to dynamically adjust the transmitted semantic information based on varying channel conditions, enhancing communication reliability and adaptability. In addition, an information bottleneck-based loss function is derived to guide the fast distillation process. Simulation results verify that the proposed scheme outperform baselines in term of task accuracy, model size, computation latency, and training data requirements.

% This letter proposes a lite task-oriented semantic communication (TOSC) scheme, enhanced through knowledge distillation (KD) from a large artificial intelligence (AI) model. The inherent complexity of large AI models results in extended inference times during distillation, and the limited availability of training data, due to storage constraints, compromises distillation performance. These issues render standard KD approaches unsuitable for TOSC scenario. To tackle these challenges, a fast distillation method is proposed, which incorporates a pre-stored compression mechanism to forego repetitive inference, and employs a data augmentation parameter coding strategy to increase training data diversity. In addition, an information bottleneck-based loss function is derived to guide the fast distillation process. Simulation results verify that the proposed scheme outperform baselines in term of task accuracy, model size, communication latency, and training data requirements.
\end{abstract}

\begin{IEEEkeywords}
Semantic communication, knowledge distillation, large-scale AI model, information bottleneck.
\end{IEEEkeywords}

\section{Introduction}
\IEEEPARstart{S}{emantic} communication (SemCom), a technique capable of notably enhancing communication efficiency and robustness, is now acknowledged as a foundational technology driving the progression of sixth-generation wireless networks\cite{framework_Yang}. Existing SemCom systems can be categorized into two main categories: source data reconstruction and task execution. In contrast to data reconstruction, task-oriented SemCom (TOSC) aims to transmit semantic-aware information and obtains the inference result in a mission-oriented fashion without recovering the original data, which is beneficial to meet the increasing requirements of complicated, diverse, and intelligent transmission.

Recently, various TOSC systems have been proposed to perform various downstream tasks\cite{liu2023task, Lee, ASC_liu, MU-DeepSC1}. The authors in \cite{liu2023task} proposed an explainable TOSC system for text sentiment analysis and question answering tasks based on semantic triplets. The authors in \cite{Lee, ASC_liu} focusing on image classification task, proposed various semantic encoding and compression schemes to reduce the transmission burden. The authors in \cite{MU-DeepSC1} developed multi-users SemCom system for visual question answering task. However, almost all current works suffer from limited semantic understanding capability stemming from constrained network structure and optimization strategies. These limitations significantly hamper the performance and broader applicability of SemCom.

Fortunately, there has been considerable advancement in the field of large-scale artificial intelligence (LAI) models, particularly in sophisticated transformer models encompassing billions of parameters. LAI models have achieved remarkable breakthroughs in various domains such as natural language understanding, image analysis, and video generation. These models offer numerous benefits, including comprehensive knowledge representation and precise semantic interpretation. Wireless networks are expected to increasingly provide diverse intelligent services with greater generality and efficiency, driven by the integration of LAI models\cite{maatouk2023large}. The authors in \cite{guo2023semantic} proposed a semantic importance-aware communication scheme using pre-trained large language models (LLMs). The pre-trained language model was utilized to quantify the semantic importance of data frames, based on which semantic importance-aware power allocation was investigated. 
% The authors in \cite{nam2023language} proposed a language-oriented SemCom based on LLMs and generative models, in which machines communicate using human language messages that can be interpreted via natural language processing techniques for SemCom efficiency. 
The authors in \cite{jiang2023large} proposed a LAI model based SemCom framework for image data, in which segment anything model was designed as knowledge base to split the original image into different semantic segments. The authors in \cite{chen2024personalizing} establish a foundation model-based multi-user semantic communication framework via personalized federated parameter-efficient finetuning, in which each user equipment is provided with personalized semantic services in diverse downstream tasks.
% The authors in \cite{jiang2023multi} proposed a multimodal language model (MAM) based multimodal SemCom framework, in which the MAM was utilized to enable the transformation between multimodal and unimodal data while preserving semantic consistency. 
The aforementioned works directly apply LAI models in the processes of semantic encoding or transmission. However, due to their vast number of parameters and high computational complexity, these models incur significant energy consumption and computational latency, even during inference. This makes them unsuitable for meeting the real-time and low-energy requirements of next-generation wireless communications. Therefore, there is an urgent need for research on lightweight semantic communication schemes that reduce communication complexity and fully exploit the strengths of LAI models.

To enable real-time TOSC for resource-constrained scenarios (e.g., Internet of vehicles), this paper proposes a novel lightweight TOSC scheme, where the semantic codec is distilled from LAI models. 
% This approach aims to enhance the practicality and efficiency of the semantic communication system, as well as advancing its adaptability to diverse communication scenarios. 
The primary contributions are summarized as follows:

\begin{itemize}
\item[$\bullet$] \textcolor{black}{\textit{To the best of the author's knowledge, this is the first work that integrates LAI model and TOSC through knowledge distillation.} By distilling LAI model into a streamline TOSC network, we can achieve both the advantages of large model's robust semantic representation capabilities, and the benefits from small model's low-power and low-latency attributes, striking a good balance between complexity and accuracy of semantic communication.}

\item[$\bullet$] \textcolor{black}{We propose a fast distillation method (FDM) that effectively transfers knowledge from LAI models to a lightweight TOSC model. This method significantly reduces distillation time and resource consumption while maintaining high performance. We then derive a loss function based on the information bottleneck (IB) principle to guide the fast distillation process, achieving the optimal semantic rate-distortion tradeoff. Consequently, the lightweight TOSC can significantly reduce the communication burden while maintaining the communication accuracy.}
 
\item[$\bullet$] \textcolor{black}{Additionally, to address the limitation of LAI models in accounting for wireless channel variations, we incorporate a channel adaptive module, which dynamically adjusts the weights of transmitted semantics in response to varying channel conditions, ensuring improved communication reliability and adaptability. Simulation results verify the superiority of the proposed scheme.}
\end{itemize}

% Simulation results verify that the proposed lightweight TOSC scheme markedly surpasses conventional methods, exhibiting superior performance in task accuracy, communication latency, and training data requirements.

\begin{figure*}[t]
	\begin{center}
		\includegraphics[width=0.8\linewidth]{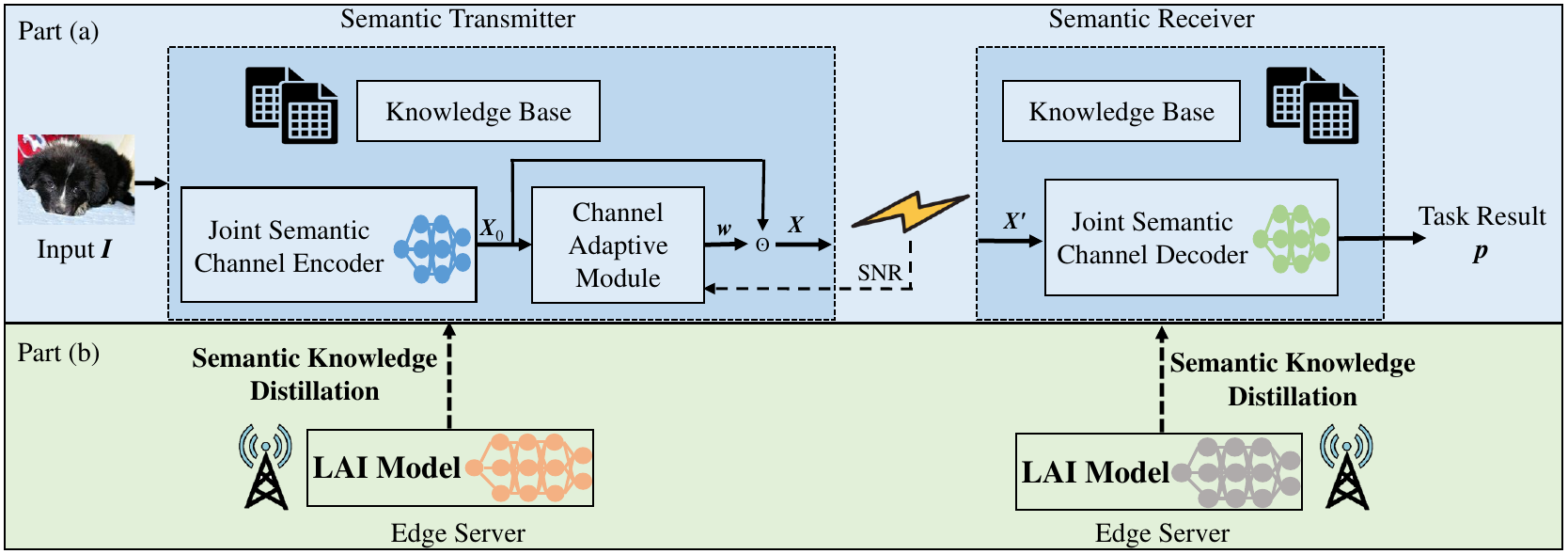}
	\end{center}
	\caption{\textcolor{black}{System model of lightweight task-oriented semantic communications empowered by large-scale artificial intelligence models.}}
	\label{fig:system}
	\vspace{-0.3cm}
\end{figure*}

\section{Proposed Distillation-based Lightweight TOSC Framework}
% As shown in Fig. \ref{fig:system}, we consider a TOSC system assisted by edge servers, in which encoder and decoder are constructed by lite neural networks. Specifically, the transmitter consists of a lite joint semantic channel encoder to extract the semantic features and combat noise, and an OFDM transmitter to generate OFDM symbols to facilitate the subsequent transmission. The receiver is composited with an OFDM receiver for symbol detection, and a lite joint semantic channel decoder for semantics recovery and task execution. Both the transmitter and receiver have access to a knowledge base, which can guide the semantic extraction and task execution processes. In addition, edge servers that store large AI models can empower the semantic compression ability of the TOSC system. Specifically, the powerful knowledge in LAI models can be distilled and periodically updated to the lite joint source channel codec during idle periods.
% to enhance their semantic extraction and representation capabilities, which will be detailed in Sec. \ref{algorithm}. 
% Furthermore, for latency-tolerant transmission, the source data can be directly encoded and decoded by the large AI models hosted on edge servers, ensuring high reliability in data transmission and high performance in intelligent tasks. 
% The detailed end-to-end TOSC process is introduced as follows.
As shown in Fig. \ref{fig:system}, we consider a TOSC system that consists of a semantic transmitter, a semantic receiver, and two edge servers. Specifically, the transmitter consists of a joint semantic channel encoder to extract the semantic features, and a channel adaptive module to combat the channel noise. The receiver is composited with a joint semantic channel decoder for semantics recovery and task execution. The LAI models are deployed at edge servers to guide the training of joint semantic channel encoder and decoder, i.e., part (b) in Fig. \ref{fig:system}.
% As shown in Fig. \ref{fig:system}, we consider a TOSC system assisted by edge servers, which can be segmented into two main components: (a) end-to-end (E2E) communication part and (b) semantic knowledge distillation part. We first detail the E2E communication process in the following.

% \subsection{E2E TOSC Model}
We first detail the end-to-end (E2E) communication process, as shown in part (a) of Fig. \ref{fig:system}. At the transmitter, the joint semantic channel encoder is first utilized to extract the semantics from source data $\boldsymbol{I}$, which can be denoted by
\begin{eqnarray}\label{SE}
	{\boldsymbol {X}_0} = {R_{\boldsymbol{\alpha}}}({\boldsymbol {I}}),
\end{eqnarray}
where ${R_{\boldsymbol {\alpha}}}(\cdot)$ denotes the joint semantic channel encoder with parameter vector $\boldsymbol{\alpha}$.

The extracted semantics ${\boldsymbol {X}_0}$ is then fed into a channel adaptive module, which utilizes both the extracted semantics and the channel signal-to-noise ratio (SNR) as inputs. The module outputs semantic weights\cite{channeladaptive}, which can be represented by
\begin{align}\label{}
{\boldsymbol{w}} = {\Phi _{\boldsymbol{\theta }}}\left( {{\boldsymbol{X}_0}, s} \right)
\end{align}
where ${\Phi _{\boldsymbol{\theta }}}\left( { \cdot } \right)$ is the channel adaptive module with trainable parameter vector $\boldsymbol{\theta}$ and $s$ is the channel SNR.

Following this, the weighted semantics can be denoted by
\begin{align}\label{}
{\boldsymbol{X}} = {\boldsymbol{w}} \odot {\boldsymbol{X}_0}
\end{align}
where $\odot$ is channel-wise multiplication.

The received noise-corrupted signal at the receiver can be expressed as
\begin{eqnarray}\label{Channel}
{\boldsymbol{X}'} = {\boldsymbol H}{\boldsymbol{X}} + \boldsymbol{n},
\end{eqnarray}
where ${\boldsymbol H}$ is the channel gain, $\boldsymbol{n}\sim{{\cal C}{\cal N}}(0,{\boldsymbol{\sigma }}^2{\boldsymbol I})$ is the independent and identically distributed complex Gaussian noise, and ${\boldsymbol{\sigma }}^2$ indicates the noise variance.

At the receiver, the received semantics $\boldsymbol{X}'$ is proceeded via joint semantic channel decoder to complete the task, e.g. image classification, which can be expressed as
\begin{eqnarray}\label{SD}
	{\boldsymbol{p}} = {R_{\boldsymbol{\mu}}^{ - 1}}({\boldsymbol{{{\boldsymbol{X}}'}}}),
\end{eqnarray}
where ${\boldsymbol{p}}$ is the task result, and ${R_{\boldsymbol{\mu}}^{ - 1}}(\cdot)$ denotes the joint semantic channel decoder with parameter vector ${\boldsymbol{\mu}}$. 

% \subsection{Semantic Knowledge Distillation Model}
\textcolor{black}{To effectively utilize LAI for guiding the training of the semantic encoder and decoder, it is crucial to design a lightweight semantic codec that can be efficiently implemented by both the transmitter and receiver, given their resource constraints. In this work, we focus on leveraging KD to streamline the model.} 
% The reasons that we use KD instead of other methods are: 1) no need for adjustments to the LAI model itself, and 2) flexible adaptability to various scenarios. During idle periods, the powerful knowledge of LAI models, stored in edge servers, can be distilled and periodically updated to the lightweight joint semantic channel codec. Semantic knowledge distillation significantly enhances the codec’s capabilities in semantic extraction and representation, while maintaining low complexity. The detailed methodology and its implications are thoroughly discussed in Section \ref{algorithm}.
While LAI models bring substantial benefits, two main challenges arise during the conventional knowledge distillation process, necessitating improvements to conventional methods. 
\begin{enumerate}[label=(\roman*)]
    \item \textcolor{black}{Since conventional knowledge distillation necessitates repeated inference using the LAI model, a considerable amount of computing resources must be allocated to process training data through the LAI model during each iteration. This process is both inefficient and costly. Moreover, in practical semantic communication systems, model updates are often required due to dynamic changes in the environment or communication channels. The high computational overhead of conventional distillation makes frequent model retraining impractical, limiting the adaptability of the system to evolving conditions.}
    \item The pretrained LAI model lacks channel awareness, which can significantly impact the performance of the distilled lightweight model, especially in dynamic wireless environments with rapidly changing channel conditions.
\end{enumerate}

% (i) Since knowledge distillation necessitates repeated inference using the LAI model, a considerable amount of computing resources must be allocated to process training data through the LAI model during each iteration. This process is both inefficient and costly. (ii) To minimize the performance degradation of the lightweight TOSC system, an extensive amount of training data is required for effective knowledge distillation. This demand places a significant storage burden on user devices, posing a considerable challenge for deployment. 

\textcolor{black}{In the subsequent section, we propose a novel distillation framework to overcome these two challenges.}

\section{Proposed KD-Based Lightweight TOSC}
\label{algorithm}

\subsection{Proposed Distillation Framework} 

% \begin{figure*}[t]
% 	\begin{center}
% 		\includegraphics[width=0.6\linewidth]{Fast_KD_framework.pdf}
% 	\end{center}
% 	\caption{The proposed fast distillation method.}
% 	\label{fig:KD}
%         \vspace{-0.5cm}
% \end{figure*}

% The proposed fast distillation method is shown in Fig. \ref{fig:KD}. 
To address challenge (i), we propose a fast distillation method, which adopts a pre-stored compression mechanism that computes and stores the logits of the LAI model in advance, so as to eliminate the need for repeated inference with the large-scale teacher model during the distillation process. Furthermore, to alleviate the storage burden of extensively pre-stored logits, we adopt a logits compression method. This method involves selecting and retaining only the top-$K$ largest values, denoted as $\left\{ {{y_{i\left( k \right)}}} \right\}_{k = 1}^K \in {\boldsymbol{y}}$, and their corresponding indices, $\left\{ {i\left( k \right)} \right\}_{k = 1}^K$, from each logit vector ${\boldsymbol{y}}$. During the distillation phase, these compressed logits are then utilized through logit smoothing\cite{TinyViT}. Mathematically, it can be represented by
\begin{align}
{\hat y_n} = \left\{ {\begin{array}{*{20}{c}}
{{y_{i\left( k \right)}},}&{n = i\left( k \right),k = 1,2, \ldots ,K,}\\
{\frac{{1 - \sum\limits_{k = 1}^K {{y_{i\left( k \right)}}} }}{{C - K}},}&{n \ne i\left( k \right),k = 1,2, \ldots ,K,}
\end{array}} \right.
\end{align}
where ${\boldsymbol{\hat y}} = \left[ {{{\hat y}_1},{{\hat y}_2}, \cdots ,{{\hat y}_n}, \cdots ,{{\hat y}_C}} \right]$ is the smoothed logits for distillation and $C$ is the dimension of logit ${\boldsymbol{y}}$. When the compression factor $K$ is small, i.e., $K \ll C$, it can reduce logits’ storage by orders of magnitude. Such a pre-stored compression mechanism markedly diminishes the computational load and energy expenditure, making the distillation process more efficient and viable for lightweight TOSC model. 

\subsection{IB-Based Distillation Loss}
\textcolor{black}{In this subsection, we delve into deriving the loss function for the fast distillation method. Conventional knowledge distillation loss functions, such as mean squared error or Kullback-Leibler (KL) divergence, are primarily designed to align the outputs of teacher and student models. While effective in many contexts, these methods often fall short in promoting efficient compression. Specifically, the distilled lightweight model may produce semantic features with considerable redundancy, as traditional loss functions lack mechanisms to enforce compact and efficient data representations. This limitation can result in semantic inefficiencies, with unnecessary information being retained in the distilled model. To address this issue, we incorporate IB theory \cite{Sun_AIB} into the distillation process, utilizing mutual information (MI) to quantify and control the trade-off between knowledge retention and compression. Unlike conventional IB-based approaches that optimize end-to-end model training\cite{Shao_IB, 10579852,10738311}, our method applies IB principles within a knowledge distillation framework, guiding the transfer of essential information from the large model to the lightweight model. This ensures that the student model not only mimics the teacher’s outputs but also captures the most relevant semantic features in a compact form. By introducing MI-based constraints, our approach enhances the effectiveness of knowledge transfer while reducing unnecessary information, ultimately improving both distillation efficiency and semantic communication performance.} The formulation of IB can be formulated as
\begin{eqnarray}\label{IB}
\mathcal{L} = I\left( {\boldsymbol{I};\boldsymbol{X}} \right) - \beta I\left( {\boldsymbol{X}';\boldsymbol{Y}} \right),
\end{eqnarray}
where $\boldsymbol{I}$ is the source data, and $\boldsymbol{X}$ is the transmitted semantics. $\boldsymbol{X}'$ is the received semantics and $\boldsymbol{Y}$ is the inference label. $\beta > 0$ is a hyperparameter for balancing compression and task performance. The first term is the rate term, encouraging $\boldsymbol{X}$ to compress information related to $\boldsymbol{I}$.
% , which can be represented by
% \begin{align}\label{IB1}
% I\left( {\boldsymbol{I};\boldsymbol{X}} \right) &= {{\mathbb{E}}_{p(\boldsymbol{i},\boldsymbol{x})}}\left[ {\log \frac{{p(\boldsymbol{i},\boldsymbol{x})}}{{p(\boldsymbol{i})p(\boldsymbol{x})}}} \right] \nonumber\\ &= {{\mathbb{E}}_{p\left( {\boldsymbol{i},\boldsymbol{x}} \right)}}\left[ {\log {p_{\boldsymbol{\alpha}} }\left( {\boldsymbol{x}\left| \boldsymbol{i} \right.} \right) - \log p\left( \boldsymbol{x} \right)} \right].
% \end{align}
% where ${D_{KL}}\left( \cdot {\left\| \cdot \right.} \right)$ is the Kullback- Leibler (KL) divergence.  Lowercase variables, $\boldsymbol{x}$ and $\boldsymbol{i}$, represent instances of uppercase random variables, $\boldsymbol{X}$ and $\boldsymbol{I}$, respectively.
The second term is the distortion term, encouraging $\boldsymbol{X}'$ to predict $\boldsymbol{Y}$, which can be denoted by
\begin{align}\label{IB2}
I\left( {\boldsymbol{X}';\boldsymbol{Y}} \right) = H(\boldsymbol{Y}) - H(\boldsymbol{Y}\left| \boldsymbol{X}' \right.) \implies  - H(\boldsymbol{Y}\left| \boldsymbol{X}' \right.),
\end{align}
where $H(\cdot)$ is the entropy and $H(\boldsymbol{Y})$ is ignored since it is a constant. To reduce dependence on true labels, which are often difficult or impractical to obtain in real-world scenarios, we leverage the outputs of LAI model as pseudo-labels\footnote{In the proposed fast distillation method, $\boldsymbol{Y}$ is the pre-stored logits of LAI model.}. This strategy enables the effective utilization of unlabeled data, offering a label-free approach that maximizes the value of unlabeled datasets. However, (\ref{IB}) is mathematically intractable due to the unknown joint and marginal distributions.

To solve this problem, we first derive the upper bound of $I\left( {\boldsymbol{I};\boldsymbol{X}} \right)$, which obtained via the following lemma.
\begin{lemma}
    The upper bound of $I\left( {\boldsymbol{I};\boldsymbol{X}} \right)$ is
\begin{align}\label{IB3}
    I_{\rm {up}}\left( {\boldsymbol{I};\boldsymbol{X}} \right) = {{\mathbb{E}}_{p\left( {\boldsymbol{i},\boldsymbol{x}} \right)}}\left[ {\log {p_{\boldsymbol{\alpha}} }\left( {\boldsymbol{x}\left| \boldsymbol{i} \right.} \right)} \right] - \nonumber\\{{\mathbb{E}}_{p\left( \boldsymbol{x} \right)}}{{\mathbb{E}}_{p\left( \boldsymbol{i} \right)}}\left[ {\log {p_{\boldsymbol{\alpha}} }\left( {\boldsymbol{x}\left| \boldsymbol{i} \right.} \right)} \right]
\end{align}
\end{lemma}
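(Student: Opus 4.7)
The plan is to show that $I_{\rm up}(\boldsymbol{I};\boldsymbol{X}) - I(\boldsymbol{I};\boldsymbol{X}) \geq 0$, identifying the conditional distribution induced by the encoder $R_{\boldsymbol{\alpha}}$ with $p_{\boldsymbol{\alpha}}(\boldsymbol{x}\mid\boldsymbol{i})$, so that we may also write $p(\boldsymbol{x}\mid\boldsymbol{i}) = p_{\boldsymbol{\alpha}}(\boldsymbol{x}\mid\boldsymbol{i})$ under the encoder model. The proposed bound has the same algebraic form as the contrastive log-ratio upper bound on mutual information, so the proof strategy will follow that template.

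First I would write out the standard expression $I(\boldsymbol{I};\boldsymbol{X}) = \mathbb{E}_{p(\boldsymbol{i},\boldsymbol{x})}\bigl[\log p_{\boldsymbol{\alpha}}(\boldsymbol{x}\mid\boldsymbol{i}) - \log p(\boldsymbol{x})\bigr]$, using the encoder-induced conditional for the numerator. Then I would form the difference $I_{\rm up}(\boldsymbol{I};\boldsymbol{X}) - I(\boldsymbol{I};\boldsymbol{X})$ and observe that the two $\mathbb{E}_{p(\boldsymbol{i},\boldsymbol{x})}[\log p_{\boldsymbol{\alpha}}(\boldsymbol{x}\mid\boldsymbol{i})]$ terms cancel, leaving
\begin{equation*}
\mathbb{E}_{p(\boldsymbol{i},\boldsymbol{x})}[\log p(\boldsymbol{x})] - \mathbb{E}_{p(\boldsymbol{i})}\mathbb{E}_{p(\boldsymbol{x})}[\log p_{\boldsymbol{\alpha}}(\boldsymbol{x}\mid\boldsymbol{i})].
\end{equation*}
Collapsing the first expectation to $\mathbb{E}_{p(\boldsymbol{x})}[\log p(\boldsymbol{x})]$ and recombining under the outer expectation over $p(\boldsymbol{i})$ gives $\mathbb{E}_{p(\boldsymbol{i})}\!\left[\mathrm{KL}\bigl(p(\boldsymbol{x})\,\|\,p_{\boldsymbol{\alpha}}(\boldsymbol{x}\mid\boldsymbol{i})\bigr)\right]$, which is non-negative by Gibbs' inequality. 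This directly yields the claimed upper bound.

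The main obstacle, conceptually, is the algebraic repackaging step: the difference is not visibly a KL divergence until one notes that the mismatched sampling $\mathbb{E}_{p(\boldsymbol{i})}\mathbb{E}_{p(\boldsymbol{x})}[\cdot]$ uses the \emph{marginal} of $\boldsymbol{x}$ against a conditional on an \emph{independent} $\boldsymbol{i}$, while $\log p(\boldsymbol{x})$ can be reintroduced as $\mathbb{E}_{p(\boldsymbol{i})}\mathbb{E}_{p(\boldsymbol{x})}[\log p(\boldsymbol{x})]$ (since the integrand does not depend on $\boldsymbol{i}$). This is the key identity that exposes the KL structure; everything else is bookkeeping. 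I would also note briefly that the result requires $p_{\boldsymbol{\alpha}}(\boldsymbol{x}\mid\boldsymbol{i})$ to be the true encoder distribution, not an auxiliary variational approximation, so that no additional assumption is needed for the inequality to hold.
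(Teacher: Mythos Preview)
Your argument is correct and is exactly the standard CLUB (contrastive log-ratio upper bound) derivation: subtract the definition of $I(\boldsymbol{I};\boldsymbol{X})$ from $I_{\rm up}$, cancel the matching $\mathbb{E}_{p(\boldsymbol{i},\boldsymbol{x})}[\log p_{\boldsymbol{\alpha}}(\boldsymbol{x}\mid\boldsymbol{i})]$ terms, and recognise the remainder as $\mathbb{E}_{p(\boldsymbol{i})}\bigl[\mathrm{KL}(p(\boldsymbol{x})\,\|\,p_{\boldsymbol{\alpha}}(\boldsymbol{x}\mid\boldsymbol{i}))\bigr]\ge 0$. The paper omits its proof for space, but since it invokes $p_{\boldsymbol{\alpha}}$ as the encoder-induced conditional and then passes to a Monte Carlo estimator of exactly this two-term expression, your derivation is the intended one; your explicit remark that the bound requires $p_{\boldsymbol{\alpha}}(\boldsymbol{x}\mid\boldsymbol{i})$ to be the true encoder conditional (not a variational surrogate) is a useful clarification the paper leaves implicit.
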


\begin{proof}
The proof is omitted due to the page limitation.
\end{proof}
% \begin{proof}
% The gap between $I\left( {\boldsymbol{I};\boldsymbol{X}} \right)$ and $I_{\rm {up}}\left( {\boldsymbol{I};\boldsymbol{X}} \right)$ can be denoted by
% \begin{align}
% & \Delta:=I_{\rm {up}}\left( {\boldsymbol{I};\boldsymbol{X}} \right)-I\left( {\boldsymbol{I};\boldsymbol{X}} \right)= \nonumber\\
% & {\mathbb{E}}_{p(\boldsymbol{i}, \boldsymbol{x})}[\log p_{\boldsymbol{\alpha}}(\boldsymbol{x} \mid \boldsymbol{i})]-{\mathbb{E}}_{p(\boldsymbol{i})} {\mathbb{E}}_{p(\boldsymbol{x})}[\log p_{\boldsymbol{\alpha}}(\boldsymbol{x} \mid \boldsymbol{i})]- \nonumber\\
% & {\mathbb{E}}_{p(\boldsymbol{i}, \boldsymbol{x})}[\log p_{\boldsymbol{\alpha}}(\boldsymbol{x} \mid \boldsymbol{i})-\log p(\boldsymbol{x})]= \nonumber\\
% & {\mathbb{E}}_{p(\boldsymbol{i}, \boldsymbol{x})}[\log p(\boldsymbol{x})]-{\mathbb{E}}_{p(\boldsymbol{i})} {\mathbb{E}}_{p(\boldsymbol{x})}[\log p_{\boldsymbol{\alpha}}(\boldsymbol{x} \mid \boldsymbol{i})]= \nonumber\\
% & {\mathbb{E}}_{p(\boldsymbol{x})}\left[\log p(\boldsymbol{x})-{\mathbb{E}}_{p(\boldsymbol{i})}[\log p_{\boldsymbol{\alpha}}(\boldsymbol{x} \mid \boldsymbol{i})]\right]= \nonumber\\
% & {\mathbb{E}}_{p(\boldsymbol{x})}\left[\log \left[{\mathbb{E}}_{p(\boldsymbol{i})}[p_{\boldsymbol{\alpha}}(\boldsymbol{x} \mid \boldsymbol{i})]\right]-{\mathbb{E}}_{p(\boldsymbol{i})}[\log p_{\boldsymbol{\alpha}}(\boldsymbol{x} \mid \boldsymbol{i})]\right] \geq 0,
% \end{align}
% where the last inequality is derived from Jensen's inequality.\\
% This ends the proof.
% \end{proof}
\textcolor{black}{Then, we can use Monte Carlo method to estimate $I_{\rm {up}}\left( {\boldsymbol{I};\boldsymbol{X}} \right)$, which can be denoted by
\begin{align}\label{loss1}
{\hat I}_{\rm {up}}\left( {\boldsymbol{I};\boldsymbol{X}} \right)&=\frac{1}{N}\sum\limits_{n = 1}^N \log {p_{\boldsymbol{\alpha}} }\left( {{\boldsymbol{x}_n}\left| {{\boldsymbol{i}_n}} \right.} \right) \nonumber \\&- \frac{1}{{{N^2}}}\sum\limits_{n = 1}^N {\sum\limits_{j = 1}^N {\log {p_{\boldsymbol{\alpha}} }\left( {{\boldsymbol{x}_n}\left| {{\boldsymbol{i}_j}} \right.} \right)} },
\end{align}
where $N$ is the number of samples, and ${\boldsymbol{i}_n}$ and ${\boldsymbol{x}_n}$ are the $n$-th input and $n$-th semantics, respectively.}
Next, we employ the variational information bottleneck (VIB) approach\cite{Sun_AIB} to derive a tractable bound of the distortion term, $I\left( {\boldsymbol{X}';\boldsymbol{Y}} \right)$. Specifically, ${q_{\boldsymbol{\mu}}(\boldsymbol{y}\left| \boldsymbol{x}'\right.)}$ is used to approximate the true distribution $p(\boldsymbol{y}\left| \boldsymbol{x}'\right.)$. Based on the non-negative property of KL divergence, we can obtain
\begin{align}\label{}
\mathbb{E}_{p(\boldsymbol{y} \mid \boldsymbol{x}')} \left[ \log p(\boldsymbol{y} \mid \boldsymbol{x}') \right] \geq \mathbb{E}_{p(\boldsymbol{y} \mid \boldsymbol{x}')} \left[ \log q_{\boldsymbol{\mu}}(\boldsymbol{y} \mid \boldsymbol{x}') \right].
\end{align}
Then, the variational upper bound for the conditional entropy of $\boldsymbol{Y}$ given $\boldsymbol{X}'$ can be denoted by
\begin{align}\label{loss2}
H(\boldsymbol{Y} \mid \boldsymbol{X}') \leq \mathbb{E}_{p(\boldsymbol{y}, \boldsymbol{x}')} \left[ -\log q_{\boldsymbol{\mu}}(\boldsymbol{y} \mid \boldsymbol{x}')\right].
\end{align}
\textcolor{black}{Particularly, the negative expectation term in the FDM can be achieved by the soft cross-entropy and Monte Carlo sampling. Therefore, by incorporating (\ref{loss1}) and (\ref{loss2}), the IB-based loss function of the FDM can be expressed as
% \begin{align}
% \mathcal{L} &= \frac{1}{K} \sum_{k = 1}^K \bigg\{ \sum_{j = 1}^J D_{KL}\left( p_{\boldsymbol{\alpha}}\left( \boldsymbol{s}'_{(k,j)} \mid \boldsymbol{i}_k \right) \| q\left( \boldsymbol{s}'_{(k,j)} \right) \right) \nonumber\\
% &\phantom{= \frac{1}{K} \sum_{k = 1}^K \{} + \beta \boldsymbol{Y}_k \log \left( S_{\left(\boldsymbol{\alpha}, \boldsymbol{\mu} \right)}\left( \boldsymbol{d}\left( \boldsymbol{I}_k \right) \right) \right) \bigg\},
% \end{align}
% where $K$ is the number of samples, and $J$ is the dimension of semantic feature. 
\begin{align}\label{loss_f}
\mathcal{L} &= {\hat I}_{\rm {up}}\left( {\boldsymbol{I};\boldsymbol{X}} \right) - \beta \frac{1}{N} \sum_{n = 1}^N \boldsymbol{y}_n \log \left( S_{\left(\boldsymbol{\alpha}, \boldsymbol{\mu} \right)}\left( \boldsymbol{i}_n \right) \right),
\end{align}
where $\boldsymbol{i}_n$ is the $n$-th input and $S_{\left(\boldsymbol{\alpha}, \boldsymbol{\mu} \right)}\left( \cdot \right)$ encapsulates the operations of the lightweight TOSC network, integrating both the encoder, ${R_{\boldsymbol {\alpha}}}(\cdot)$, and decoder, ${R_{\boldsymbol{\mu}}^{ - 1}}(\cdot)$. (\ref{loss_f}) is a tractable and differential form of IB-based loss function and is used for training.}

\subsection{Channel Adaptive Module}
\textcolor{black}{To address challenge (ii), we leverage the channel adaptive module\cite{channeladaptive} within our system, enabling it to dynamically incorporate real-time channel information into the semantic representation. This integration significantly enhances the system's adaptability and overall performance, especially under fluctuating communication conditions. Specifically, the channel adaptive module is composed of two dense layers and operates on the semantic features generated by the lightweight encoder, denoted as $\boldsymbol{X}_0 \in {\mathbb{R}^{C \times H \times W}}$. $\boldsymbol{X}_0$ is first transformed into a $C$-dimensional vector via average pooling along the spatial dimensions. This pooling step condenses spatial information into a compact representation in the feature channel domain, capturing essential characteristics. The resulting vector is then concatenated with the current channel SNR, encapsulating both semantic and channel state information. Next, the concatenated vector is passed through two fully connected dense layers. These layers are designed to compute semantic weights conditioned on the channel SNR, effectively mapping the combined input into a set of importance weights. These weights dynamically emphasize the semantics most relevant for maintaining reliable communication under varying SNR scenarios. Finally, the computed weights are applied to $\boldsymbol{X}_0$ through channel-wise multiplication, selectively scaling semantic features. This process allows the system to adaptively prioritize critical semantic elements that enhance communication robustness, minimizing the impact of adverse channel conditions. By dynamically aligning the semantic representation with real-time channel states, this module plays a crucial role in ensuring the overall reliability and efficiency of the proposed system.}

\section{Experiment Results and Analysis}
\subsection{Simulation Setup}
1) \emph{Dataset}: For evaluation, we employ the widely used Imagenet-1k dataset\cite{imagenet}, which includes 1000 object categories. Top-1 accuracy serves as the primary performance metric to assess the effectiveness of the proposed scheme.

\textcolor{black}{2) \emph{Baselines}: Since this is the first work focusing on LAI empowered image classification-oriented SemCom, inspired by \cite{jiang2023large}, we compare the proposed scheme (labeled as L-TOSC-KD) with the following baselines. 1) LAI model based TOSC: which directly uses LAI model for encoding and decoding, labeled as LA-TOSC. In this work, CLIP-ViT-L/14\cite{CLIP} is chosen as the LAI model for knowledge distillation. Developed by OpenAI, CLIP-ViT-L/14 is a state-of-the-art multimodal model that utilizes a vision transformer (ViT) with 14 layers to facilitate robust cross-modal learning and support applications such as zero-shot classification. 2) Lightweight TOSC without distillation: which trains the same lightweight TOSC network in a E2E manner without knowledge distillation, labeled as L-TOSC. 3) Conventional TOSC schemes: that use the backbone of Resnet and EfficientNet for semantic codec, labeled as Res-TOSC\cite{Lee} and Eff-TOSC\cite{ASC_liu}, respectively. To guarantee a fair comparison, we ensure that the proposed scheme maintains parameters and complexity that are comparable to or lower than those of the baselines.}

3) \emph{Structure of Semantic Codec}: The lightweight TOSC's encoder comprises a patch embedding block followed by a four stage ViT encoder, and the patch embedding block is constructed by two convolutional layers, each with a kernel size of 3, stride of 2, and padding of 1. The lightweight TOSC's decoder consists of two dense layers. The first layer has 512 units, followed by a second layer with 1000 units, which outputs the results of the task.

The experiments are conducted on a computer with Ubuntu 16.04 + CUDA 11.0. The selected deep learning framework for these experiments is PyTorch, and the hardware utilized includes an NVIDIA V100 GPU.

\subsection{Performance Comparison}
\begin{figure}[t]
\setlength{\abovecaptionskip}{1pt}
	\begin{center}
		\includegraphics[width=1\linewidth]{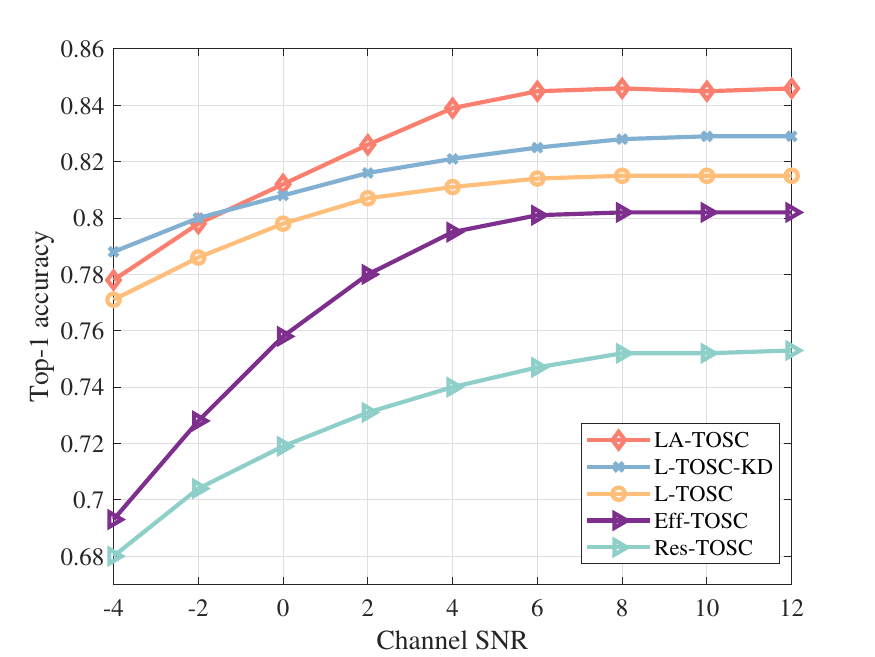}
	\end{center}
	\caption{Accuracy of different schemes versus SNRs.}
	\label{fig:AWGN}
\vspace{-0.3cm}
\end{figure}

{Fig. \ref{fig:AWGN} illustrates the performance comparison between the proposed L-TOSC-KD scheme and the baselines in terms of SNR. During the training process, all methods except LA-TOSC are trained with SNR values randomly sampled between $-$5 dB and 15 dB, and tested between $-$4 dB and 12 dB. As depicted in this figure, the LA-TOSC scheme achieves superior performance for SNRs greater than 0 dB, attributable to its intricate network architecture and extensive pretraining on a vast dataset. However, the enhanced performance comes at the expense of significant computational resources, rendering it less suitable for real-time communication scenarios. In addition, the proposed L-TOSC-KD outperform both L-TOSC and conventional TOSC schemes across the entire range of SNR values. Specifically, at an SNR of 0 dB, L-TOSC-KD shows a 1.3\%, 6.6\% and 12.4\% increase in accuracy compared to L-TOSC, Eff-TOSC, and Res-TOSC, respectively. This improvement is attributed to the scheme's ability to assimilate knowledge from the LAI model, significantly boosting its semantic coding capabilities. Intriguingly, at SNRs below 0 dB, the L-TOSC-KD model demonstrates superior performance compared to the LA-TOSC. This is due to the channel adaptive module, which incorporates the channel state during the fast distillation process, highlighting its advantage in scenarios where the LAI model lacks prior knowledge of the wireless channels.

\begin{figure}[t]
\setlength{\abovecaptionskip}{1pt}
	\begin{center}
		\includegraphics[width=\linewidth]{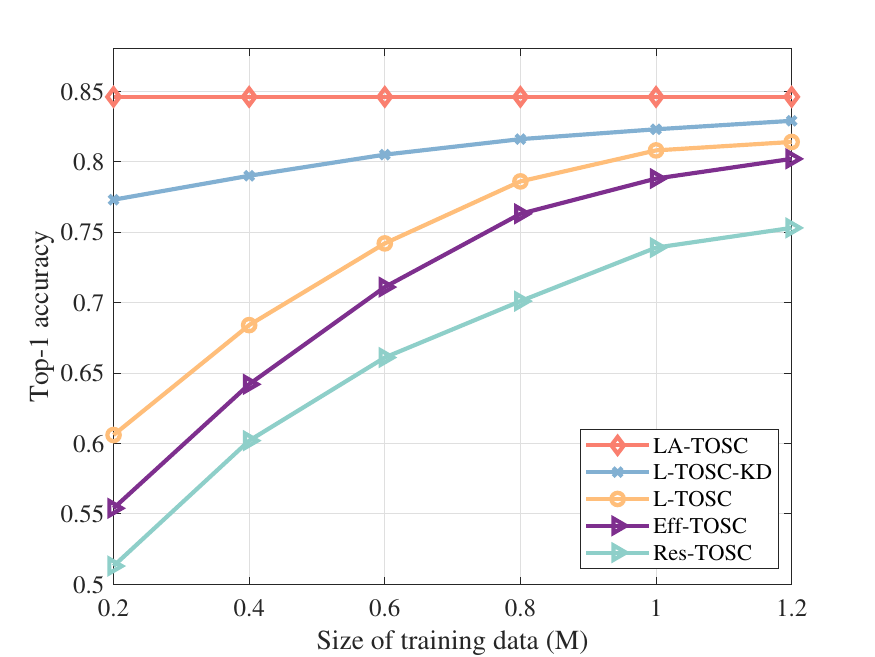}
	\end{center}
	\caption{Accuracy of different schemes versus training data size.}
	\label{fig:size}
\vspace{-0.3cm}
\end{figure}

\textcolor{black}{Fig. \ref{fig:size} shows the relationship between the accuracy and the training data size over the proposed L-TOSC-KD and baselines. It should be noted that the we maintain consistent data augmentation methods between the proposed scheme and the comparison baselines to ensure a fair comparison. The LA-TOSC, utilizing a pre-trained LAI model as its semantic encoder, does not require additional training, resulting in its performance being depicted as a constant line across varying sizes of training data. In addition, it is evident that the proposed L-TOSC-KD scheme outperforms all the baselines, even with limited training data. Specifically, with a training data size of 0.6 million, L-TOSC-KD achieves performance gains of 15.5\%, 26.5\% and 31.2\% over L-TOSC, Eff-TOSC, and Res-TOSC, respectively. This enhancement is attributed to the knowledge distillation process, which enables the L-TOSC-KD to emulate the behavior of the larger model and provides additional knowledge for more effective training. Additionally, the inferior performance of Eff-TOSC and Res-TOSC confirms the superior architecture of the proposed lightweight TOSC scheme.}

\renewcommand\arraystretch{1.2}
\begin{table}[t]
\centering
\small
\caption{The Complexity Analysis With The Model Size, Number of Parameters and Computation Latency.}
\begin{tabular}{cccc}
\toprule
Model     & Model Size & Parameters & Latency \\
\midrule
LA-TOSC   & 1164.7 MB & 305,311,208   & 93.78 ms  \\
L-TOSC-KD & 41.9 MB   & 10,996,972    & 18.06 ms   \\
L-TOSC    & 41.9 MB   & 10,996,972    & 18.06 ms  \\
Eff-TOSC  & 45.9 MB   & 12,036,884    & 23.53 ms  \\
Res-TOSC  & 97.8 MB   & 25,636,712    & 37.86 ms  \\
\toprule
\end{tabular}
\label{tab:complexity}
\vspace{-0.5cm}
\end{table}

\textcolor{black}{In Table \ref{tab:complexity}, we conduct the complexity analysis for the proposed scheme and baselines in terms of the model size, the number of parameters, and the average computation latency per image. Thanks to the proposed FDM, the proposed L-TOSC-KD demonstrates both the smallest size and the lowest parameter count among all the compared schemes. Specifically, the L-TOSC-KD's model size is 41.9 MB, constituting merely 3.6\% of the LA-TOSC's size and 42.8\% of that of Res-TOSC. Additionally, the LA-TOSC exhibits the longest transmission time per image, approximately 94 ms, attributable to its highly complex architecture. In contrast, L-TOSC-KD and L-TOSC demonstrate significantly reduced computation latency, achieving an inference speedup of over 5×. However, this reduction in model size and computational cost comes with a slight trade-off in task performance. While the knowledge distillation-enhanced L-TOSC-KD greatly improves efficiency, some minor performance degradation is observed compared to LA-TOSC, as the distilled model may not fully retain all the knowledge from the larger LAI model. Nonetheless, given the substantial reduction in computation time and resource consumption, L-TOSC-KD remains a favorable choice for real-time semantic communication scenarios, where inference speed and efficiency are crucial.}

\renewcommand\arraystretch{1.3}
\begin{table}[t]
\centering
\small
\caption{Ablation Study for With and Without FDM.}
\begin{tabular}{ccc}
\toprule
Method           & Storage Cost & Training Time \\
\midrule
with FDM    & 16 GB        & 26 hours      \\
without FDM & 139 GB \textcolor{blue}{(8.7$\times$)}       & 63 hours \textcolor{blue}{(2.4$\times$)}    \\
\toprule
\end{tabular}
\label{tab:ablation}
\vspace{-0.5cm}
\end{table}

We further conduct an ablation study to verify the effectiveness of the proposed FDM, setting $K$ as 10, with results displayed in Table \ref{tab:ablation}. ``without FDM'' refers to the method that uses LAI model for repeated inference during the lightweight TOSC distillation process. Notably, the storage cost for the method without FDM is 8.7 times higher than our proposed method. This discrepancy arises because our method retains only the essential logits, eliminating the need to store voluminous original image data ($\approx138$ GB) and the extensive parameters of the teacher model. Additionally, since FDM merely requires loading the teacher logits from a hard disk during training, it is 2.4 times faster than the method without FDM, which significantly improves the distillation efficiency.

\section{Conclusion}
\textcolor{black}{This paper presented a lightweight TOSC scheme, augmented by LAI model via KD. Specifically, a fast distillation method is proposed to improve the KD efficiency and performance. To address the robustness limitations of LAI models, a channel adaptive module is adopted to adjust the weights of transmitted semantic features to enhance overall system resilience. Furthermore, an IB-based distillation loss is derived to guide the training of the lightweight semantic codec. Simulation results verified the significant superiority of the proposed lightweight TOSC scheme. This paper laid a foundation for more intricate and advanced works of LAI model based TOSC.}

% \newpage

% \section{Biography Section}
% If you have an EPS/PDF photo (graphicx package needed), extra braces are
%  needed around the contents of the optional argument to biography to prevent
%  the LaTeX parser from getting confused when it sees the complicated
%  $\backslash${\tt{includegraphics}} command within an optional argument. (You can create
%  your own custom macro containing the $\backslash${\tt{includegraphics}} command to make things
%  simpler here.)
 
% \vspace{11pt}

% \bf{If you include a photo:}\vspace{-33pt}
% \begin{IEEEbiography}[{\includegraphics[width=1in,height=1.25in,clip,keepaspectratio]{fig1}}]{Michael Shell}
% Use $\backslash${\tt{begin\{IEEEbiography\}}} and then for the 1st argument use $\backslash${\tt{includegraphics}} to declare and link the author photo.
% Use the author name as the 3rd argument followed by the biography text.
% \end{IEEEbiography}

% \vspace{11pt}

% \bf{If you will not include a photo:}\vspace{-33pt}
% \begin{IEEEbiographynophoto}{John Doe}
% Use $\backslash${\tt{begin\{IEEEbiographynophoto\}}} and the author name as the argument followed by the biography text.
% \end{IEEEbiographynophoto}

% \vfill

\bibliographystyle{IEEEbib}
\nocite{*}\bibliography{reference}
\end{document}